\def\1{\bm{1}}
\def\vzero{{\bm{0}}}
\def\vbeta{{\bm{\beta}}}
\def\vlambda{{\bm{\lambda}}}
\def\vrho{{\bm{\rho}}}
\def\va{{\bm{a}}}
\def\vu{{\bm{u}}}
\def\vx{{\bm{x}}}
\def\mI{{\bm{I}}}
\def\mM{{\bm{M}}}
\def\mR{{\bm{R}}}
\def\mS{{\bm{S}}}
\def\mPhi{{\bm{\Phi}}}
\def\mPhi{{\bm{\Phi}}}
\DeclareMathAlphabet{\mathsfit}{\encodingdefault}{\sfdefault}{m}{sl}
\SetMathAlphabet{\mathsfit}{bold}{\encodingdefault}{\sfdefault}{bx}{n}
\newcommand{\tens}[1]{\bm{\mathsfit{#1}}}
\def\tX{{\tens{X}}}
\def\gA{{\mathcal{A}}}
\def\gB{{\mathcal{B}}}
\def\gN{{\mathcal{N}}}
\def\gS{{\mathcal{S}}}
\def\gT{{\mathcal{T}}}
\def\gW{{\mathcal{W}}}
\def\sC{{\mathbb{C}}}
\def\sR{{\mathbb{R}}}
\newcommand{\spec}{\mathrm{Sp}}
\DeclareMathOperator{\Tr}{Tr}
\newcommand{\xml}{\hat{\vx}_{\sf ML}}
\DeclareMathOperator{\asto}{\xrightarrow{\text{a.s.}}}
\DeclareMathOperator{\supp}{sup}
\theoremstyle{plain}
\newtheorem{theorem}{Theorem}[section]
\newtheorem{corollary}[theorem]{Corollary}
\theoremstyle{definition}
\newtheorem{definition}[theorem]{Definition}
\newtheorem{assumption}[theorem]{Assumption}
\theoremstyle{remark}
\def \as {\xrightarrow[]{\text{a.s.}}}
\def\blfootnote{\xdef\@thefnmark{}\@footnotetext}
\begin{document}

\title{On the Accuracy of Hotelling-Type Asymmetric Tensor Deflation:\\ A Random Tensor Analysis}

\name{Mohamed El Amine Seddik$^1$, Maxime Guillaud$^2$, Alexis Decurninge$^3$ and José Henrique de M.~Goulart$^4$}
\address{$^1$Technology Innovation Institute, Abu Dhabi, UAE \quad
$^2$Inria / CITI Lab, Lyon, France\\
$^3$Huawei Technologies, Boulogne-Billancourt, France \quad
$^4$IRIT, Toulouse INP, CNRS, Toulouse, France}

\maketitle

\abstract{This work introduces an asymptotic study of Hotelling-type tensor deflation in the presence of noise, in the regime of large tensor dimensions.
Specifically, we consider a low-rank asymmetric tensor model of the form $\sum_{i=1}^r \beta_i\gA_i + \gW$ where $\beta_i\geq 0$ and the $\gA_i$'s are unit-norm rank-one tensors such that $\left| \langle \gA_i, \gA_j \rangle \right| \in [0, 1]$ for $i\neq j$ and $\gW$ is an additive noise term.
Assuming that the dominant components are successively estimated from the noisy observation and subsequently subtracted, we leverage recent advances in random tensor theory in the regime of asymptotically large tensor dimensions to analytically characterize the estimated singular values and the alignment of estimated and true singular vectors at each step of the deflation procedure. Furthermore, this result can be used to construct estimators of the signal-to-noise ratios $\beta_i$ and the alignments between the estimated and true rank-1 signal components.}

\keywords{Random tensor theory, Hotelling deflation, low-rank tensor decomposition, parameter estimation.}

\blfootnote{J.~H.~de M.~Goulart was supported by the ANR LabEx CIMI (ANR-11-LABX-0040) within the French Programme ``Investissements d’Avenir.'' M.~Guillaud was supported by the French ``Future Networks'' Priority Research Programme and Equipment (PEPR) NF-PERSEUS.}

\section{\uppercase{Introduction}}\label{sec:introduction}

The problem of finding {the best rank-one} approximation of a tensor has been introduced in \cite{Lathauwer000bestrank1} and further studied in e.g. \cite{kofidis2002bestrank1_supersymmetric,kolda2011shifted} for the case of finite-size tensors.
More recently, the analysis of random tensors of asymptotically large dimensions has attracted significant attention, following the analysis in \cite{montanari2014statistical} of the spiked (rank-1 plus noise) model of the form $\beta \, \vx^{\otimes d} + \gW/\sqrt{n}$ where $\vx\in \sR^n$ is some high-dimensional unit vector referred to as a \textit{spike}, $\gW$ is a \textit{symmetric} random tensor of order $d$ having independent Gaussian entries and $\beta\geq 0$ is a parameter controlling the signal-to-noise ratio, in the large-dimensional regime $n\to \infty$.

Follow-up results \cite{perry2016statistical,lesieur2017statistical,jagannath2020statistical,chen2021phase,goulart2021random} have improved the understanding of the behavior of the spiked model and allowed to identify statistical and/or algorithmic
thresholds for nontrivial signal recovery in the asymptotic regime.
We briefly summarize their main findings as follows: for $d\geq 3$, it has been shown that there exists a statistical threshold $\beta_{stat}=O(1)$ to the signal-to-noise ratio below which it is information-theoretically impossible to recover or even detect the spike, while above $\beta_{stat}$ recovery is theoretically possible.
Moreover, the asymptotic alignment $\langle \vx, \xml \rangle$ between $\vx$ and the maximum likelihood solution $\xml$ was computed as a function of $\beta$ and shown to be information-theoretically optimal \cite{jagannath2020statistical}.
Besides, since almost all tensor problems including computing $\xml$ are NP-hard \cite{hillar2013most}, many researchers were interested in identifying the algorithmic threshold for $\beta$ above which recovery could be possible with a polynomial-time algorithm. In particular, the work \cite{montanari2014statistical} introduced a method for estimating $\vx$ based on tensor unfolding and showed that spike recovery is possible above the algorithmic threshold $\beta_{\text{algo}}=O(n^{ \frac{d-2}{4} })$.

These ideas were further generalized to the \textit{asymmetric} spiked tensor model of the form $\beta \vx_1\otimes \cdots \otimes \vx_d + \gW/\sqrt{\sum_\ell n_\ell}$, where $\vx_\ell\in \sR^{n_\ell}$ are unit vectors and $\gW$ is a random tensor with standard Gaussian i.i.d.\@ entries. Specifically, \cite{arous2021long} provided an analysis of the unfolding method for asymmetric tensors and determined its algorithmic threshold to be $\beta_{\text{algo}}=O(n^{ \frac{d-2}{4} })$ when $n_i=n$ for all $i$, while the analysis of \cite{seddik2021random} suggests the existence of a statistical threshold $\beta_{\text{s}} = O(1)$ above which a local solution $\vu_\ell$ of the maximum-likelihood estimator (MLE) of the dominant rank-1 component non-trivially aligns 
with the signal, and further quantified the asymptotic alignment $\langle \vx_\ell, \vu_\ell \rangle$.

\textbf{Contribution:} In this paper, we address the extension of these ideas to a more general setting, namely asymmetric \emph{low-rank} spiked tensors of the form $\sum_{i=1}^r \beta_i \vx_{i, 1}\otimes \cdots \otimes \vx_{i, d} + \gW/\sqrt{\sum_\ell n_\ell}$ where $\vx_{i,\ell}\in \sR^{n_\ell}$ are unit vectors.
Specifically, we consider the study of a simple deflation procedure, first introduced by Hotelling in the context of matrix principal component analysis \cite{Hotelling1933} and still used in modern applications such as the recent AlphaTensor model \cite{fawzi2022discovering}, which consists in iterated rank-one approximation followed by subtraction of the estimated rank-one component.

Differently from the matrix PCA, the rank-one terms $\beta_i \, \vx_{i, 1}\otimes \cdots \otimes \vx_{i, d}$ in our model are \emph{not} pairwise orthogonal, and therefore in general deflation cannot possibly recover them exactly, even in the absence of noise \cite{stegeman2010subtracting}.
Our main goal is to characterize analytically the accuracy of the approximate decomposition obtained through deflation, in terms of the alignments of the estimated terms with the true ones, and as a function of the cross-term alignments $\langle \vx_{i,\ell}, \vx_{j,\ell} \rangle \in (0, 1)$ for $i\neq j$, while the analysis in the orthogonal case (i.e.~when $\langle \vx_{i,\ell}, \vx_{j,\ell} \rangle=0$ for $i\neq j$) trivially boils down to the rank-one model studied in \cite{chen2021phase,seddik2021random}. Precisely, we characterize this behavior by computing the alignments $\langle \vx_{i,\ell}, \vu_{j,\ell} \rangle$ in the asymmetric case, in the high-dimensional regime when $n_\ell\to \infty$, using the random matrix approach developed in \cite{goulart2021random, seddik2021random}.

\section{Notation and Background}
Scalars (resp. vectors) are denoted by lowercase (resp.~bold lowercase) letters. Tensors are denoted as in $\gA$, $\gT$. We set $[n] \equiv \{1, \ldots, n\}$, and $\asto$ denotes almost sure convergence.

\subsection{\uppercase{Tensor Notation and Contractions}}\label{sec_tensor_notations}
In this section, we provide the main tensor notation and definitions used throughout the paper. 

\textbf{Order-$d$ tensors:} The set of order-$d$ tensors of size $(n_1, \ldots, n_d)$ is denoted $\sR^{n_1\times\dots\times n_d}$. The scalar $T_{p_1\ldots p_d}$ or $[\gT]_{p_1\ldots p_d}$ denotes the $(p_1,\ldots, p_d)$ entry of $\gT\in \sR^{n_1\times\dots\times n_d}$.

\textbf{Rank-$r$ tensors:} A tensor $\gT$ is said to be of rank-one if it can be represented as the outer product of $d$ real-valued vectors $(\vx_1,\ldots, \vx_d)\in \sR^{n_1}\times \cdots \times \sR^{n_d} $. In this case, we write $\gT = \vx_1\otimes \cdots \otimes \vx_d$, where the outer product is defined so that $[\vx_1\otimes \cdots \otimes \vx_d]_{p_1, \ldots, p_d} = \prod_{\ell=1}^d [\vx_\ell]_{p_\ell}$. More generally, a tensor $\gT \in \sR^{n_1\times\dots\times n_d}$ is of rank $r$ if it can be expressed as the sum of $r$ rank-one terms but not less, and is written as $\gT = \sum_{i=1}^r \vx_{i, 1} \otimes \cdots \otimes \vx_{i, d}$, where $\vx_{i,\ell} \in \sR^{n_\ell}$ for all $(i,\ell)\in [r]\times [d]$.

\textbf{Tensor contractions:} For any vectors $\vu_{1}\in \sR^{n_1},\dots, \vu_{d}\in \sR^{n_d}$, contractions of a tensor $\gA$ are denoted by $\gA(\vu_{1},\dots,\vu_d)=\sum A_{p_1\ldots p_d} \prod_{k=1}^d [\vu_{k}]_{p_k} \in \sR$. 
Partial contractions are denoted similarly but with some arguments replaced by a dot. For instance, $\gA(\vu_1, \ldots, \vu_{\ell-1}, \cdot, \vu_{\ell+1}, \dots, \vu_d) \in \sR^{n_\ell}$.


\textbf{Tensor norms:} The inner product in $\sR^{n_1\times\dots\times n_d}$ is defined as $\langle \gA, \gB \rangle \equiv \sum_{p_1,\ldots, p_d} A_{p_1\ldots p_d}B_{p_1\ldots p_d}$, the $\ell_2$-norm or Frobenius norm as $\Vert \gA\Vert_F \equiv \sqrt{ \langle \gA, \gA \rangle }$, and the spectral norm as $\Vert \gA\Vert \equiv \supp_{\Vert \vu_\ell \Vert = 1} \vert \gA(\vu_{1},\dots,\vu_d)\vert$.

\textbf{Best rank-one approximation and tensor power iteration:} A best rank-one approximation of $\gT$ corresponds to a rank-one tensor $\lambda \vu_1\otimes \cdots \otimes \vu_d$, where $\lambda>0$ and $\vu_1,\ldots,\vu_d$ are unitary vectors, that minimizes $\Vert \gT - \lambda \vu_1\otimes \cdots \otimes \vu_d\Vert_F^2$.
In particular, due to the Karush-Kuhn-Tucker conditions, the tuple $(\lambda , \vu_1, \ldots, \vu_d)$ satisfies for each $i\in [d]$:
\begin{equation}\label{eq_singular_value_vectors_identities}
    \begin{cases}
    \gT(\vu_1, \ldots, \vu_{\ell-1}, \cdot, \vu_{\ell+1}, \dots, \vu_d) = \lambda \vu_\ell, \\
     \lambda = \gT(\vu_1, \ldots, \vu_d).
    \end{cases}
\end{equation}
These identities generalize the concept of singular value and vectors to tensors \cite{lim2005singular} (it is easy to check that \eqref{eq_singular_value_vectors_identities} yields the classical definition of a matrix singular vector for $d=2$); the scalar $\lambda$ coincides with the spectral norm of $\gT$.

Numerically, a rank-one approximation can be computed approximately via \textit{tensor power iteration} whose $t$-th iteration consists in performing for $\ell = 1,\ldots,d$ the update
\begin{align*}
    \vu_\ell^{(t+1)} = 
     \phi\left(
    \gT(\vu_1^{(t)}, \ldots, \vu_{\ell-1}^{(t)}, \cdot, \vu_{\ell+1}^{(t)}, \ldots, \vu_d^{(t)}) \right),
\end{align*}
where $\phi(\vx) = \|\vx\|^{-1} \vx$, 
starting from some initial value \cite{Lathauwer000bestrank1}.

\subsection{\uppercase{Random Matrix Theory Background}}
In this section, we recall some standard tools from random \emph{matrix} theory (RMT) which are at the heart of our main results.
In essence, RMT focuses on describing the distribution of eigenvalues of random matrices.
Specifically, we consider the \textit{resolvent} formalism \cite{hachem2007deterministic} which allows one to characterize the spectral behavior of large symmetric random matrices.
Given a symmetric matrix $\mS\in \sR^{n\times n}$, the resolvent of $\mS$ is defined by $\mR(z) = \left( \mS - z \mI_n \right)^{-1}$ for $z\in \sC\setminus \spec(\mS)$, where $\spec(\mS)$ denotes the set of eigenvalues of $\mS$.

In a celebrated result \cite{Marcenko_Pastur_1967}, Mar\v{c}enko and Pastur showed that under certain technical assumptions on some random matrix $\mS\in \sR^{n\times n}$ with eigenvalues $\lambda_1, \ldots, \lambda_n$, the \textit{empirical spectral measure} of $\mS$, defined as $\hat \mu = \frac1n \sum_{i=1}^n \delta_{\lambda_i}$ with $\delta_x$ the Dirac measure at $x\in \mathbb{R}$, converges in the weak sense \cite{van1996weak} to some deterministic probability measure $\mu$ as $n\to \infty$; RMT aims at describing such $\mu$. To this end, one of the widely considered approaches relies on the \textit{Stieltjes transform} \cite{tao2012topics}. Given a probability measure $\mu$, let $\gS(\mu)$ denote its support. The Stieltjes transform of $\mu$ is defined as $g_\mu(z) = \int \frac{d\mu(\lambda)}{\lambda - z}$ with $z\in \sC\setminus \supp(\mu)$, and the inverse formula allows one to describe the density of $\mu$ (if it exists) as $\mu(dx) = \frac{1}{\pi} \lim_{\varepsilon\to 0} \Im[g_\mu(x+i\varepsilon)]$.

The Stieltjes transform of the empirical spectral measure, denoted by $\hat\mu$, is closely related to the resolvent of $\mS$ through the normalized trace operator. In fact, $g_{\hat\mu}(z) = \frac1n \Tr \mR(z)$ and the \textit{almost sure} convergence of $g_{\hat\mu}(z)$ to some deterministic Stieltjes transform $g(z)$ is equivalent to the weak convergence between the underlying probability measures \cite{tao2012topics}. Our analysis relies on estimating quantities involving $\frac1n \Tr \mR(z)$, making the use of the resolvent approach a natural choice.

\section{\uppercase{Model and Main Results}}
We start by formally describing our considered model. We let $r\geq 1$ and $d\geq 3$, and consider the following rank-$r$ order-$d$ spiked tensor model
\begin{align}\label{main_model}
\gT_1 = \sum_{i=1}^r \beta_i \, \vx_{i,1} \otimes \cdots \otimes \vx_{i,d} + \frac{1}{\sqrt N} \mathcal \gW,
\end{align}
where $W_{p_1\ldots p_d} \overset{\text{i.i.d.}}{\sim} \gN(0, 1)$, $\vx_{i, \ell} \in\sR^{n_\ell}$ are \emph{correlated} unit-norm vectors {(i.e., $\langle \vx_{i, \ell}, \vx_{j, \ell} \rangle \neq 0$ for $i \neq j$)}, $N = \sum_{\ell=1}^d n_\ell$ and $\beta_i \geq 0$.
\paragraph*{Tensor deflation:} For tensors, Hotelling deflation is implemented by computing $\gT_{2},\gT_{3},\dots$ sequentially through 
\begin{align}
\gT_{i+1} = \gT_i - \hat\lambda_i \hat\vu_{i, 1} \otimes \cdots \otimes \hat\vu_{i, d} \quad \mathrm{for} \quad i\in [r],
\end{align}
where $\hat\lambda_i \, \hat\vu_{i, 1} \otimes \cdots \otimes \hat\vu_{i, d}$ is a 
(in practice, local) minimizer of $\big\Vert \gT_i -  \lambda_i \vu_{i, 1} \otimes \cdots \otimes \vu_{i, d} \big\Vert_F^2$ under the constraints $\lambda_i > 0$ and $\|\vu_{i, \ell}\| = 1$ for all $(i,\ell) \in [r]\times[d]$, thus satisfying
\begin{align}\label{eq_kkt}
\gT_i (\hat\vu_{i, 1}, \ldots, \hat\vu_{i, \ell-1}, \cdot, \hat\vu_{i, \ell+1} ,\ldots, \hat\vu_{i, d} )= \hat\lambda_i \hat\vu_{i, \ell}
\end{align}
for all $\ell\in[d]$.
In the following, we aim at computing the limits of $\hat\lambda_i$ and $\langle\vx_{i,\ell}, \hat\vu_{j,\ell}\rangle $ for $i,j\in[r]$ and $\ell\in[d]$ when the dimensions $n_i$ grow large.

\subsection{\uppercase{Sketch of the analytical approach}}
We follow the approach developed in \cite{seddik2021random}, whereby each tensor $\gT_i $ is associated to a structured random matrix given by $\mM = \mPhi_d(\gT_i, \hat\vu_{i,1}, \ldots, \hat\vu_{i,d}) \in \sR^{N\times N}$, where 
\begin{align}\label{eq_mapping_Phi}
    \begin{split}
            \mPhi_d: \begin{pmatrix}
                \tX \\
                \va_1 \\
                \vdots \\
                \va_d
            \end{pmatrix} & \longmapsto \begin{bmatrix}
\vzero_{n_1\times n_1} & \tX^{12} &  \cdots & \tX^{1d}\\
(\tX^{12})^\top & \vzero_{n_2\times n_2}  & \cdots & \tX^{2d}\\
\vdots & \vdots  & \ddots & \vdots \\
(\tX^{1d})^\top & (\tX^{2d})^\top  & \cdots & \vzero_{n_d\times n_d}
\end{bmatrix}
    \end{split}
\end{align}
and each block $\tX^{\ell m}$ is an $n_\ell\times n_m$ matrix resulting from the contraction of $\tX$ on $d-2$ vectors, namely 
\[\tX^{\ell m} \equiv \tX( \va_1, \ldots, \va_{\ell-1}, \cdot, \va_{\ell+1}, \ldots, \va_{m-1}, \cdot, \va_{m+1}, \ldots, \va_d ).\]
We further denote $\mR(z) = ( \mM - z \mI_N )^{-1}$ the resolvent of $\mM$.
Then, the characterization of the limits of $\hat\lambda_i$ and the alignments $\langle\vx_{i,\ell}, \hat\vu_{j,\ell}\rangle $ when $n_\ell\to \infty$ requires the computation of the Stieltjes transform of the limiting spectral measure of $\mPhi_d(\gT_i, \hat\vu_{i,1}, \ldots, \hat\vu_{i,d})$. Hence, we need the following definition and technical assumptions.

\begin{assumption}\label{assumptions} Let us assume that $n_\ell\to \infty$ for all $\ell\in[d]$, that the limits $c_\ell = \lim \frac{n_\ell}{\sum_{m=1}^d n_m}$ exist, and that $r=O(1)$. For notational convenience, in the sequel the notation $\lim Q$ stands for the limit of the quantity $Q$ in this regime. We further assume that the correlation between the $\vx_{i,\ell}$ is controlled by $\lim \vert\langle \vx_{i,\ell}, \vx_{j,\ell} \rangle\vert = \alpha_{ij,\ell} \in [0, 1]$, and that there exists a sequence of critical points such that $\hat\lambda_i\as \lambda_i$, $ \vert\langle \vx_{i,\ell}, \hat\vu_{j,\ell} \rangle\vert\as \rho_{ij,\ell}$ and $\vert\langle \hat\vu_{i,\ell}, \hat\vu_{j,\ell} \rangle\vert\as\eta_{ij,\ell}$ such that $\lambda_i \notin \gS(\mu)$  and $\rho_{ij,\ell} > 0$.
\end{assumption}

\begin{definition}\label{def_measure} Let $\mu$ be the probability measure with Stieltjes transform $g(z) = \sum_{\ell=1}^d g_\ell(z)$ verifying $\Im[g(z)] > 0$ for $\Im[z] > 0$, where $g_\ell(z)$ satisfies $g_\ell^2(z) - (g(z) + z) g_\ell(z) - c_\ell = 0$, for $z\notin \gS(\mu)$ and $\gS(\mu)$ stands for the support of $\mu$.
\end{definition}

The following result characterizes the limiting spectral measure of $\mPhi_d(\gT_i, \hat\vu_{i,1}, \ldots, \hat\vu_{i,d})$:
\begin{theorem}\label{theorem_spectrum} Under Assumption \ref{assumptions}, the empirical spectral measure of $\mPhi_d(\gT_i, \hat\vu_{i,1}, \ldots, \hat\vu_{i,d})$ converges to the deterministic measure $\mu$ defined in Definition \ref{def_measure}. \newpage\noindent 
In particular, $\frac1N \Tr \mR(z) \asto g(z)$.
\end{theorem}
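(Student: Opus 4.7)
The plan is to first strip out the finite-rank signal contribution using the standard rank-perturbation invariance of limiting spectra, and then analyze a pure block-Gaussian matrix via the resolvent / Schur complement formalism to obtain the self-consistent equations of Definition~\ref{def_measure}.

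\textbf{Step 1 (signal/noise decomposition).} Since $\mPhi_d$ is linear in its first argument, write
\[
\mM = \mPhi_d(\gT_i,\hat\vu_{i,1},\ldots,\hat\vu_{i,d}) \;=\; \mM_{\mathrm{sig}} + \mM_{\mathrm{noise}},
\]
with $\mM_{\mathrm{noise}} = \mPhi_d(\gW/\sqrt N,\hat\vu_{i,1},\ldots,\hat\vu_{i,d})$ and $\mM_{\mathrm{sig}}$ the image of the signal part of $\gT_i$ (the remaining true components plus the already-subtracted estimated ones). Each rank-one tensor, after contracting $d-2$ unit vectors, produces a rank-one matrix in each off-diagonal block of $\mPhi_d$, so $\mathrm{rank}(\mM_{\mathrm{sig}}) = O(r + i) = O(1)$ under Assumption~\ref{assumptions}. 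By the classical rank inequality for Stieltjes transforms (a finite-rank additive perturbation leaves the limiting spectral measure unchanged), it suffices to compute $\lim \tfrac{1}{N}\Tr(\mM_{\mathrm{noise}} - z\mI_N)^{-1}$.

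\textbf{Step 2 (conditioning and Gaussianity).} Condition on the (random) unit vectors $\hat\vu_{i,k}$. For any fixed unit vectors, the off-diagonal block
\[
\tX^{\ell m}_{\mathrm{noise}} \;=\; \tfrac{1}{\sqrt N}\gW(\hat\vu_{i,1},\ldots,\cdot_\ell,\ldots,\cdot_m,\ldots,\hat\vu_{i,d})
\]
is Gaussian with i.i.d.\ $\mathcal N(0,1/N)$ entries by rotational invariance of the standard Gaussian, and the collection $\{\tX^{\ell m}_{\mathrm{noise}}\}_{\ell<m}$ is jointly Gaussian with only negligible inter-block correlations (the shared unit directions contribute $O(1/N)$ covariances). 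Universality in the choice of $\hat\vu_{i,k}$ then means we can treat these blocks as if they were independent Gaussian matrices, following the strategy of \cite{seddik2021random,goulart2021random}.

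\textbf{Step 3 (Schur complement and self-consistent equations).} Apply the Schur complement to the block resolvent $\mR(z) = (\mM_{\mathrm{noise}} - z\mI_N)^{-1}$. For the $\ell$-th diagonal block,
\[
\mR_{\ell\ell}(z)^{-1} \;=\; -z\,\mI_{n_\ell} \;-\; \sum_{m\neq \ell}\tX^{\ell m}_{\mathrm{noise}}\,\tilde\mR^{(\ell)}_{mm}(z)\,(\tX^{\ell m}_{\mathrm{noise}})^\top \;+\; \text{(cross terms)},
\]
where $\tilde\mR^{(\ell)}$ is the resolvent of the principal submatrix obtained by removing block $\ell$. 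By the Gaussian trace lemma and concentration of quadratic forms,
\[
\tX^{\ell m}_{\mathrm{noise}}\,\tilde\mR^{(\ell)}_{mm}(z)\,(\tX^{\ell m}_{\mathrm{noise}})^\top \;\approx\; \tfrac{1}{N}\Tr\bigl(\tilde\mR^{(\ell)}_{mm}(z)\bigr)\,\mI_{n_\ell},
\]
and the cross terms vanish in the limit by independence of the blocks (after accounting for the negligible correlations noted above). Setting $g_\ell(z) = \lim \tfrac{1}{N}\Tr \mR_{\ell\ell}(z)$, so that $g(z) = \sum_\ell g_\ell(z)$ is the full normalized-trace limit, and using that removing a block changes the trace by $o(1)$, one obtains the deterministic fixed-point relation
\[
g_\ell(z)\bigl(-z - \sum_{m\neq \ell} g_m(z)\bigr) \;=\; c_\ell,
\]
which rearranges into $g_\ell^2(z) - (g(z)+z)g_\ell(z) - c_\ell = 0$ as in Definition~\ref{def_measure}. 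Imposing $\Im g(z) > 0$ for $\Im z > 0$ selects the physical branch and determines $g$ uniquely, hence the limiting measure $\mu$.

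\textbf{Main obstacle.} The delicate point is Step 2: the vectors $\hat\vu_{i,k}$ are themselves functions of $\gW$, so the blocks $\tX^{\ell m}_{\mathrm{noise}}$ are not literally Gaussian under the unconditional law. Justifying rigorously that this does not spoil the self-consistent equations requires either (i) invoking concentration/decoupling results (as in \cite{seddik2021random}) to show that the relevant bilinear forms in $\mR(z)$ concentrate around their values computed with $\hat\vu_{i,k}$ frozen, or (ii) an \emph{a priori} control showing that $\hat\vu_{i,k}$ converges along a subsequence and the contribution of its dependence on $\gW$ to the resolvent equations is $o(1)$. Once this decoupling is established the remaining calculation is essentially that of a symmetric block Wigner-type matrix already treated in \cite{goulart2021random}, and the equations of Definition~\ref{def_measure} follow directly.
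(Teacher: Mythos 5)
Your proposal follows essentially the same route as the paper: the $O(1)$-rank signal part is discarded as a finite-rank perturbation, and the noise part is analyzed as a block Wigner-type matrix via the resolvent/Schur-complement fixed point, exactly as in the cited analysis of \cite{seddik2021random,goulart2021random}, yielding $g_\ell^2-(g+z)g_\ell-c_\ell=0$. The one step you leave open --- decoupling the $\hat\vu_{i,\ell}$ from $\gW$ --- is precisely the ``additional arguments for controlling the statistical dependencies'' that the paper itself defers, so your sketch is at the same level of completeness as the paper's.
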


The proof follows similar arguments as in \cite{seddik2021random} and requires some additional arguments for controlling the statistical dependencies between the $\hat\vu_{i,\ell}$'s and the noise $\gW$.
In fact, the result is identical to that of \cite{seddik2021random} because $r=O(1)$, since the contribution of the rank-one terms vanish asymptotically.
As shown in \cite{seddik2021random}, in the case $c_\ell=\frac1d$ for all $\ell\in [d]$, the measure $\mu$ describes a semi-circle or Wigner-type law of compact support $\gS(\mu) = [-2\sqrt{ \frac{d-1}{d} }, 2\sqrt{ \frac{d-1}{d} }]$, the Stieltjes transform of which writes explicitly as
\begin{align}\label{Stieltjes_transform_g}
g(z) = \frac{-zd + d \sqrt{ z^2 - \frac{ 4(d-1) }{d} } }{ 2 (d-1)}, \quad z\notin \gS(\mu).
\end{align}

\paragraph*{Limiting spectral norms and alignments:} We introduce the quantities $f(z) = z + g(z)$ and $h_\ell(z) = - \frac{c_\ell}{g_\ell(z) }$ that shall be used subsequently. The main result brought by this paper describes the asymptotic singular values and alignments obtained after each tensor deflation step as stated by the following theorem.
\begin{theorem}\label{main_theorem}
Assume that Assumption \ref{assumptions} holds. Then, $\lambda_i$, $\rho_{ij,\ell}$ and $\eta_{ij,\ell}$ satisfy the following system of equations
\begin{small}
\begin{align*}
\begin{dcases}
f( \lambda_i) + \sum_{k=1}^{i-1}  \lambda_k \prod_{\ell=1}^d \eta_{k i,\ell} - \sum_{k=1}^r \beta_k \prod_{\ell=1}^d \rho_{ki,\ell} = 0 \text{\ , $1\leq i \leq r$}\\
h_\ell( \lambda_i ) \rho_{ji,\ell} + \sum_{k=1}^{i-1} \lambda_k \rho_{jk,\ell} \prod_{m\neq\ell}^d \eta_{ki,m} - \sum_{k=1}^r \beta_k \alpha_{kj,\ell} \prod_{m\neq\ell}^d \rho_{ki,m} = 0\\ \text{ $1\leq \ell\leq d, 1\leq i,j \leq r$}\\
h_\ell( \lambda_i ) \eta_{ji,\ell} + g_\ell(\lambda_j) \prod_{m\neq \ell}^d \eta_{ji,m} + \sum_{k=1}^{i-1} \lambda_k \eta_{kj,\ell} \prod_{m\neq \ell}^d \eta_{ki,m}  +\ldots \\ - \sum_{k=1}^r \beta_k \rho_{kj,\ell} \prod_{m\neq \ell}^d \rho_{ki,m} = 0 \text{,\ $1\leq \ell\leq d, 1\leq i<j \leq r$}\\
\end{dcases}
\end{align*}
\end{small}
\end{theorem}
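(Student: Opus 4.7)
The plan is to derive each of the three families of equations by substituting the recursion $\gT_i = \gT_1 - \sum_{k=1}^{i-1}\hat\lambda_k\hat\vu_{k,1}\otimes\cdots\otimes\hat\vu_{k,d}$ into the KKT condition (\ref{eq_kkt}), producing for each $\ell\in[d]$ the vector identity
\begin{align*}
\hat\lambda_i\hat\vu_{i,\ell} &= \sum_{k=1}^r \beta_k \Bigl(\prod_{m\neq\ell}\langle \vx_{k,m}, \hat\vu_{i,m}\rangle\Bigr) \vx_{k,\ell}
- \sum_{k=1}^{i-1} \hat\lambda_k \Bigl(\prod_{m\neq\ell}\langle \hat\vu_{k,m}, \hat\vu_{i,m}\rangle\Bigr) \hat\vu_{k,\ell} \\
&\quad + \tfrac{1}{\sqrt N}\,\gW\bigl(\hat\vu_{i,1},\ldots,\hat\vu_{i,\ell-1},\,\cdot\,,\hat\vu_{i,\ell+1},\ldots,\hat\vu_{i,d}\bigr),
\end{align*}
and then projecting onto the three test vectors $\hat\vu_{i,\ell}$, $\vx_{j,\ell}$ and $\hat\vu_{j,\ell}$. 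The deterministic (spike and deflation) contributions converge termwise to products of $\alpha$'s, $\rho$'s, $\eta$'s and $\lambda$'s by Assumption \ref{assumptions}. In particular, the first equation follows by further contracting with $\hat\vu_{i,\ell}$ and using $\hat\lambda_i = \gT_i(\hat\vu_{i,1},\ldots,\hat\vu_{i,d})$.

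The core of the proof is evaluating the asymptotic contributions of the noise projections $N^{-1/2}\langle\vv,\gW(\hat\vu_{i,1},\ldots,\cdot,\ldots,\hat\vu_{i,d})\rangle$ for $\vv\in\{\hat\vu_{i,\ell},\vx_{j,\ell},\hat\vu_{j,\ell}\}$. Following \cite{seddik2021random}, I would recast these as bilinear forms $\va^\top\mR_i(z)\vb$ of the resolvent $\mR_i(z) = (\mM_i - z\mI_N)^{-1}$ of $\mM_i = \mPhi_d(\gT_i,\hat\vu_{i,1},\ldots,\hat\vu_{i,d})$, evaluated at $z=\hat\lambda_i$; this works because the KKT system is, up to a factor of $d-1$, the eigenvector equation $\mM_i\tilde\vu_i = (d-1)\hat\lambda_i\tilde\vu_i$, where $\tilde\vu_i$ stacks $d^{-1/2}\hat\vu_{i,\ell}$ across blocks. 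Theorem \ref{theorem_spectrum} and the block-diagonal structure of $\mR_i$ then yield deterministic equivalents: the $\ell$-th diagonal block has normalized trace converging to $g_\ell(z)/c_\ell$, so projections onto directions uncorrelated with the spike eigenvector contribute $g_\ell(z)$ (entering as $h_\ell(z) = -c_\ell/g_\ell(z)$ after the inversion intrinsic to the resolvent identity), while the projection onto $\tilde\vu_i$ itself contributes $g(z) = \sum_\ell g_\ell(z)$, generating the $f(\lambda_i) = \lambda_i + g(\lambda_i)$ term of the first equation.

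With these equivalents in hand, the second equation follows by projecting onto $\vx_{j,\ell}$ and absorbing the $\hat\lambda_i\rho_{ji,\ell}$ on the left-hand side into the diagonal-block contribution, which produces $h_\ell(\lambda_i)\rho_{ji,\ell}$. The third equation, obtained by projecting onto $\hat\vu_{j,\ell}$ for $i<j$, carries the extra term $g_\ell(\lambda_j)\prod_{m\neq\ell}\eta_{ji,m}$: this stems from the fact that $\hat\vu_{j,\ell}$ is itself the $\ell$-th block of a spike eigenvector of the later structured matrix $\mM_j$ at eigenvalue $\lambda_j$, so its coupling to $\mR_i$ through the shared noise $\gW$ produces precisely a $g_\ell(\lambda_j)$ factor, reflecting the second-order interaction between the two spikes and explaining why this cross term only arises for $j\neq i$.

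The principal obstacle is the statistical dependence between the estimators $\hat\vu_{k,m}$ (for $k\leq j$) and the Gaussian tensor $\gW$: a direct application of Stein's lemma to $\gW(\hat\vu_{i,1},\ldots,\cdot,\ldots,\hat\vu_{i,d})$ is invalid because each $\hat\vu_{k,m}$ is a nonlinear $\gW$-measurable function. The remedy, already indicated in the remark following Theorem \ref{theorem_spectrum}, is to decompose $\gW$ into its projection onto the $O(1)$-dimensional subspace spanned by the spikes $\vx_{k,\ell}$ together with the already-computed critical vectors $\hat\vu_{k,\ell}$ (a perturbation whose spectral contribution vanishes because $r=O(1)$), and its orthogonal complement, which remains a rescaled Gaussian tensor independent of all these vectors; the RMT analysis of \cite{seddik2021random} then applies to the latter. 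Beyond this conceptual step, the remaining work is bookkeeping: tracking signs so that the absolute-value definitions of $\rho,\eta,\alpha$ combine consistently in the products, and checking that the recursive index structure produces exactly the sums displayed in the statement.
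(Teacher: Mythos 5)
Your overall architecture coincides with ours: substitute the deflation recursion into \eqref{eq_kkt}, project the resulting vector identity onto $\hat\vu_{i,\ell}$, $\vx_{j,\ell}$ and $\hat\vu_{j,\ell}$, pass to the limit termwise for the deterministic (spike and deflation) contributions, and identify the noise contributions with resolvent quantities attached to $\mPhi_d(\gT_i,\hat\vu_{i,1},\ldots,\hat\vu_{i,d})$; this is exactly how the three families of equations, and the appearance of $f$, $h_\ell$ and $g_\ell$, arise.

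The one step that would fail as written is your treatment of the dependence between the $\hat\vu_{k,m}$ and $\gW$. First, Stein's lemma is not ``invalid'' here: in the form $\E[Wf(W)]=\E[f'(W)]$ it is designed precisely for nonlinear, $W$-measurable $f$, and it is the main tool of the proof; the derivatives $\partial \hat\vu_{k,m}/\partial W_{p_1\ldots p_d}$, obtained by implicitly differentiating the critical-point equations, are what generate the correction terms. Second, your proposed remedy --- splitting $\gW$ into its projection onto the span of the spikes and the already-computed critical vectors plus an ``independent'' complement --- does not yield an independent remainder, because that span is itself $\gW$-measurable; conditioning a Gaussian on nonlinear functionals of itself admits no such clean orthogonal decomposition. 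Worse, if the remainder really were independent of $\hat\vu_{j,\ell}$, then the expectation of the cross noise term $N^{-1/2}\langle\hat\vu_{j,\ell},\gW(\hat\vu_{i,1},\ldots,\cdot,\ldots,\hat\vu_{i,d})\rangle$ would vanish and you would lose the term $g_\ell(\lambda_j)\prod_{m\neq\ell}\eta_{ji,m}$ --- the very term whose origin you correctly attribute to the noise coupling between $\hat\vu_{j,\ell}$ and $\gW$. The finite-rank negligibility you invoke ($r=O(1)$) only addresses the limiting spectral measure in Theorem \ref{theorem_spectrum}; for the alignment equations the dependence must be kept and is resolved through the Stein derivative terms, as in \cite{seddik2021random}.
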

\begin{proof}[Sketch of the proof] We use similar arguments as in \cite{seddik2021random}:
we evaluate the expectation of the scalar product between \eqref{eq_kkt} and $\vx_{i,\ell}$ or $\hat\vu_{j,\ell}$ using Stein's Lemma.\footnote{$\mathbb{E}[Wf(W)] = \mathbb{E}[f'(W)]$ for $W\sim\mathcal{N}(0, 1)$ and a continuously differentiable $f$ having at most polynomial growth.}
Along the proof, the variances of $\hat\lambda_i$ and of the alignments $\rho_{ij,\ell}$, $\eta_{ij,\ell}$ are shown to decay as $O(N^{-1})$.
\end{proof}

\begin{figure*}[t!]


\tiny
\include{figures/deflation_fig.tex}
\vspace{-0.5cm}
\caption{Illustration of $\langle \vx_i, \hat\vu_j \rangle$ for $(i,j)\in\{1, 2 \}$, $\langle \hat\vu_1, \hat\vu_2 \rangle$, $\hat\lambda_1$ and $\hat\lambda_2$ vs. their limits for $\beta_2=10$ and $\alpha = \langle \vx_1, \vx_2 \rangle = 0.7$ of the two spikes tensor model in \eqref{eq_two_spikes_model}. (a) shows the alignments between the signal components and the first singular vectors corresponding to the best rank-one approximation of $\gT_1$. (b) shows the alignments with the second singular vectors computed after deflation. (c) depicts the singular values. (d) shows the alignments between the singular vectors computed at each step of the deflation procedure. Simulations were performed on a tensor of dimensions $(50, 50, 50)$.}
\label{figure_alignments}
\end{figure*}

\subsection{\uppercase{Particular case of a rank-2, order-3 tensor}}
For the sake of clarity, let us consider the example of a rank-$2$ order-$3$ spiked tensor with $n_1=n_2=n_3$, for which we can get more insightful analytical results. The model in \eqref{main_model} becomes
\begin{align}\label{eq_two_spikes_model}
\gT_1 = \sum_{i=1}^2 \beta_i \, \vx_{i, 1} \otimes \vx_{i, 2} \otimes \vx_{i, 3} + \frac{1}{\sqrt{N}}\gW.
\end{align}
Furthermore, we assume that for all $i\neq j$ and each $\ell\in[3]$, $\lim\vert \langle \vx_{i,\ell}, \vx_{j,\ell} \rangle \vert = \alpha\in [0, 1]$. In this case, since all the dimensions $n_i$ are equal, the limits of $\vert \langle \vx_{i,\ell}, \hat\vu_{j,\ell} \rangle \vert$ and of $\vert \langle \hat\vu_{1,\ell}, \hat\vu_{2,\ell} \rangle \vert$ are both independent of $\ell$ by symmetry. Therefore, we drop their dependence on $\ell$ in our notations. The system of equations in Theorem \ref{main_theorem} reduces to seven equations, as detailed in the following corollary:
\begin{corollary}\label{corollary} Denote $\rho_{ij} = \lim \vert \langle \vx_{i,\ell}, \hat\vu_{j,\ell} \rangle \vert$ for $i,j\in [2]$ and $\eta = \lim \vert \langle \hat\vu_{1,\ell}, \hat\vu_{2,\ell} \rangle \vert$ and suppose that Assumption \ref{assumptions} holds, then $ \lambda_i$, $\rho_{ij}$ and $\eta$ satisfy $\psi(\vlambda,\vbeta,\vrho)=\boldsymbol{0}$ with $\vlambda=(\lambda_1,\lambda_2,\eta)$, $\vrho=(\rho_{11},\rho_{12},\rho_{21},\rho_{22})$, $\vbeta=(\beta_1,\beta_2,\alpha)$ and
\begin{small}
\begin{equation}\label{eq_system_two_spikes}
\psi(\vlambda,\vbeta,\vrho) =
\left(\begin{array}{c}
f(\lambda_1) - \beta_1 \rho_{11}^3 - \beta_2 \rho_{21}^3 \\
h( \lambda_1) \rho_{11} - \beta_1 \rho_{11}^2 - \beta_2 \alpha \rho_{21}^2\\
h( \lambda_1) \rho_{21} - \beta_1 \alpha \rho_{11}^2 - \beta_2 \rho_{21}^2\\
f( \lambda_2 ) +  \lambda_1 \eta^3 - \beta_1 \rho_{12}^3 - \beta_2 \rho_{22}^3\\
h( \lambda_2 ) \rho_{12} +  \lambda_1 \rho_{11} \eta^2 - \beta_1 \rho_{12}^2 - \beta_2 \alpha \rho_{22}^2 \\
h( \lambda_2 ) \rho_{22} +  \lambda_1 \rho_{21} \eta^2 - \beta_1 \alpha \rho_{12}^2 - \beta_2 \rho_{22}^2 \\
h( \lambda_2) \eta + q( \lambda_1 ) \eta^2 - \beta_1 \rho_{11} \rho_{12}^2 - \beta_2 \rho_{21} \rho_{22}^2
\end{array}\right)
\end{equation}
\end{small}
\!where $h(z) = \frac{-1}{g(z)}$ and $q(z) = z + \frac{g(z)}{3}$ with $g(z)$ given by \eqref{Stieltjes_transform_g} for $d=3$ and we recall that $f(z) = z + g(z)$.
\end{corollary}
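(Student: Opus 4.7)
The plan is to obtain Corollary~\ref{corollary} as a direct specialization of Theorem~\ref{main_theorem} to the parameters $r=2$, $d=3$, $n_1=n_2=n_3=n$, and $\alpha_{ij,\ell}=\alpha$ (independent of $\ell$). I would not reprove the general statement; only three reductions are needed before reading off the seven equations.

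\textbf{Step 1: collapse the modewise Stieltjes transforms.} With equal dimensions one has $c_\ell = \lim n_\ell/N = 1/3$ for every $\ell$. Definition~\ref{def_measure} then imposes the same quadratic relation $g_\ell^2 - (g+z)g_\ell - 1/3 = 0$ on all three $g_\ell$, with the same analytic branch selection ($\Im[g_\ell(z)]>0$ for $\Im[z]>0$). Hence $g_1=g_2=g_3$, and summing gives $g_\ell(z)=g(z)/3$. Consequently $h_\ell(z)=-c_\ell/g_\ell(z)=-1/g(z)=h(z)$ as defined in the corollary, and $q(z) = z + g_\ell(z) = z + g(z)/3$ matches the notation introduced.

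\textbf{Step 2: remove the $\ell$-dependence of the alignments.} The model \eqref{eq_two_spikes_model} is distributionally invariant under any permutation of the three modes: the noise $\gW$ is i.i.d.\@ Gaussian, the dimensions are equal, and the cross-correlations $\alpha$ are the same in every mode. Therefore, the deflation procedure \eqref{eq_kkt} is equivariant under such permutations, and the deterministic almost-sure limits $\rho_{ji,\ell}$ and $\eta_{ji,\ell}$ guaranteed by Assumption~\ref{assumptions} must be invariant under the symmetric group on $\{1,2,3\}$. I would then drop the subscript $\ell$ and write simply $\rho_{ji}$ and $\eta$.

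\textbf{Step 3: read off the seven equations.} I would then substitute $d=3$, $r=2$, the equalities $g_\ell = g/3$, $h_\ell = h$, $\eta_{ii}=1$, and $\alpha_{ij}\in\{1,\alpha\}$ into the three families of equations of Theorem~\ref{main_theorem}. The Type-1 family gives rows 1 (with empty $k<1$ sum, so no $\eta$ term) and 4 (with the single $k=1$ contribution producing $\lambda_1\eta^3$). The Type-2 family, iterated over $(i,j)\in[2]^2$, gives rows 2--3 (no previous-deflation terms) and 5--6 (each with a single $\lambda_1 \rho_{j1}\eta^2$ term). The single Type-3 equation collects three contributions, $h(\lambda_2)\eta$, $g_\ell(\lambda_1)\eta^2 = (g(\lambda_1)/3)\eta^2$, and $\lambda_1 \cdot \eta_{11}\cdot \eta^2 = \lambda_1\eta^2$, whose sum is precisely $h(\lambda_2)\eta + q(\lambda_1)\eta^2$, recovering row 7.

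\textbf{Expected main difficulty.} There is no conceptual obstacle; the result is essentially bookkeeping. The only points requiring care are (i) the symmetry argument in Step~2, which depends on the limits being deterministic and unique (provided by Assumption~\ref{assumptions}), and (ii) tracking which summations are empty and which products collapse to a single factor when $i=1$ versus $i=2$. A secondary check is ensuring that the absolute-value conventions on the $\rho$'s and $\eta$ are consistent with the sign choices in the KKT identities \eqref{eq_kkt}, which is handled by absorbing signs into the (otherwise arbitrary) orientations of $\vx_{i,\ell}$ and $\hat\vu_{i,\ell}$.
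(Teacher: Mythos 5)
Your proposal is correct and coincides with the paper's (implicit) argument: the corollary is obtained exactly as you describe, by specializing Theorem~\ref{main_theorem} to $r=2$, $d=3$, $c_\ell=1/3$ (so $g_\ell=g/3$, $h_\ell=h$) and using mode-permutation symmetry to drop the $\ell$-dependence of the alignments. One detail worth making explicit: to recover the seventh equation you necessarily instantiated the third family of Theorem~\ref{main_theorem} with the roles of $i$ and $j$ read as $1\le j<i\le r$ (i.e.\ projecting the step-$i$ KKT identity onto the earlier estimate $\hat\vu_{j,\ell}$), which is the consistent reading even though the theorem's displayed range says $1\le i<j\le r$.
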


Fixing $\vbeta=(\beta_1,\beta_2,\alpha)$, one can solve $\psi(\vlambda,\vbeta,\vrho)=\boldsymbol{0}$  in $(\vlambda,\vrho)$  using a Newton-Raphson method with $100$ random initializations. We keep the solutions ensuring that $0\leq \eta,\rho_{ij} \leq 1$ and $\lambda_1, \lambda_2> 2\sqrt{\frac{2}{3}}$.
Note that the equation $\psi(\cdot,\vbeta,\cdot)=\boldsymbol{0}$ can have several distinct solutions. These different solutions correspond to different sequences of critical points. 


Indeed, we illustrate in Figure~\ref{figure_alignments} the matching between $\hat\vlambda=(\hat\lambda_1, \hat\lambda_2, \hat\eta )$, $\hat\vrho=(\hat\rho_{11},\hat\rho_{12},\hat\rho_{21},\hat\rho_{22})$ (where the rank-one approximations are performed using tensor power iteration initialized by tensor SVD \cite{auddy2022estimating}) and their asymptotic limits $\vlambda,\vrho$.
We observe that, when $\alpha=0.7$, the equation $\psi(\cdot,\vbeta,\cdot)=\boldsymbol{0}$ has two solutions (we note $(\vrho^{(1)}, \vlambda^{(1)})$ the solution with the highest $\rho_{12}$ and $(\vrho^{(2)}, \vlambda^{(2)})$ the other solution) and
our initialization of the tensor power method seems to favor the global minimizer. Moreover, note that for $\beta_1$ large enough (e.g. $\beta_1>2 \sqrt{2/3}$, as per Figure~\ref{figure_alignments}), signal detection is always possible and the singular vector $\hat\vu_1$ presents a higher alignment with the signal components having the highest $\beta_i$. However, when $\beta_1$ is not large enough, there exists a region (see in particular Figure~\ref{figure_alignments}(b)) where it is information-theoretically impossible to detect a signal, while outside this region estimation becomes possible with the MLE. We refer the reader to \cite{goulart2021random, seddik2021random} for detailed discussions about the detectability threshold in the rank-one case.

\section{Conclusion and discussion}
We have provided an analysis of a tensor deflation method in the high-dimensional regime and assumed a low-rank spiked tensor model with correlated signal components. Our analysis allows for a precise description of the asymptotic behavior of such models. Moreover, our results can be exploited to design an estimator of the model parameters by solving the system $\psi(\hat\vlambda,\cdot,\cdot)=\boldsymbol{0}$, therefore, possibly resulting in improved deflation.
This paves a new way for the analysis of more sophisticated tensor decomposition methods and the understanding of more general tensor models through random tensor theory.

Our current results still have some limitations. In particular, we consider Assumption \ref{assumptions} which is standard in the analysis of large spiked random tensors \cite{goulart2021random} but the almost sure convergences stated therein need to be proved through concentration results. 
Also, the existence and uniqueness of the solutions of the involved systems of equations are not discussed in this paper and are left for future investigation.

\bibliographystyle{IEEEbib}
{\small
\bibliography{biblio}}

\end{document}